\title{Promises and Challenges of Causality for Ethical Machine Learning}
\author{
    %Authors
    % All authors must be in the same font size and format.
    Aida Rahmattalabi, 
    Alice Xiang
}
\title{My Publication Title --- Single Author}
\author {
    Author Name
}
\title{My Publication Title --- Multiple Authors}
\author {
    % Authors
    First Author Name,\textsuperscript{\rm 1}
    Second Author Name, \textsuperscript{\rm 2}
    Third Author Name \textsuperscript{\rm 1}
}
\newcolumntype{P}[1]{>{\centering\arraybackslash}p{#1}}
\newtheorem{assumption}{Assumption}
\newtheorem{definition}{Definition}
\newtheorem{theorem}{Theorem}
\newtheorem{lemma}{Lemma}
\newtheorem{observation}{Observation}
\newtheorem{proposition}{Proposition}
\newcommand{\E}{\mathbb E} 
\renewcommand{\P}{\mathbb P}
\begin{document}

\maketitle

\begin{abstract}
Recent years have witnessed an increasing interest in causal reasoning for designing fair decision-making systems due to its compatibility with legal frameworks, interpretability for human stakeholders, and robustness to spurious correlations, among other factors. The recent attention, however, has been accompanied with great skepticism due to practical and epistemological challenges with applying existing causal fairness approaches. Motivated by the long-standing empirical work on causality in econometrics, social sciences, and biomedical sciences, in this paper we lay out the conditions for appropriate application of causal fairness under the ``potential outcomes framework.'' To this end, we propose a shift from postulating interventions on \emph{immutable social categories} to their \emph{perceptions} and highlight two key aspects of interventions that are largely overlooked in the causal fairness literature: timing and nature of manipulations. We argue that such conceptualization is key in evaluating the validity of causal assumptions and conducting sound causal analysis including avoiding post-treatment bias. Further, we illustrate how causality can address the limitations of existing fairness metrics that depend upon statistical correlations. Specifically, we introduce causal variants of common statistical fairness notions, and we make a novel observation that under the causal framework there is no fundamental disagreement between different fairness criteria. Finally, extensive experiments on synthetic and real-world datasets including a case study on police stop and search decisions demonstrate the efficacy of our framework in evaluating and mitigating unfairness.
\end{abstract}

\maketitle
\section{Introduction}
Recently, there has been a growing interest in applying causality for unfairness evaluation and mitigation~\cite{Kusner2017CounterfactualFairness, Kilbertus2017AvoidingReasoning, nilforoshan2022causal}. Causality provides a conceptual and technical framework for addressing questions about the effect of (hypothetical) interventions on, in this context, sensitive attributes such as race, gender, etc. This is in contrast with fairness criteria that merely rely on passive observations~\cite{kamiran2013Quantifying,Hardt2016EqualityLearning, Zafar2017FairnessMistreatment,Kleinberg2017InherentScores}. These statistical criteria achieve fairness by constraining the relationships between observed variables, often in conflicting ways. In fact, it has been shown that it is impossible to satisfy these criteria simultaneously on a dataset~\cite{Kleinberg2017InherentScores, Chouldechova2017FairInstruments,SolonBarocasMoritzHardt2020FairnessOpportunities}. Causality, on the other hand, shifts the focus from association to the identification of \emph{sources of disparities} which is also more compatible with legal requirements of evaluating algorithmic bias~\cite{Xiang2021ReconcilingBias}. 

Nevertheless, causal fairness has too been subject to criticism. One objection is around the validity of the assumptions in causal modeling. The majority of recent research on causal fairness has focused on structural causal models, which encode the relationships between variables via a Directed Acyclic Graph (DAG)~\cite{Kusner2017CounterfactualFairness,Nabi2018FairOutcomes,Chiappa2019PathspecificFairness}. Existing approaches assume the DAG is known or employ causal discovery methods~\cite{Malinsky2018CausalGuide}. Causal discovery from observational data, however, relies on strong untestable assumptions which are often unexamined. %and does not generally pin down all possible causal details in a DAG
This is problematic, as slight errors in the models can have significant impact on fairness conclusions~\cite{binkyte2022causal}.

There are also concerns about considering social categories as a cause~\cite{Kohler-Hausmann2019EddieDiscrimination,Hu2020WhatsLearning,Kasirzadeh2021TheLearning}. From one perspective, these attributes are largely determined at the time of conception. Hence, they are modeled as source nodes in a DAG which can directly or indirectly influence the descendent variables. Under this view, one needs to identify all problematic pathways through which a sensitive attribute influences the outcome to evaluate and mitigate unfairness. In addition to the modeling challenge of this view, in practice, a single entity may not be held liable for the discrimination along an entire causal pathway. Indeed, many anti-discrimination mechanisms investigate whether \emph{an individual or institutional actor} has behaved in a discriminatory manner. For example, in an employment setting, a racial discrimination lawsuit aims to determine whether a firm has withheld benefits, e.g., hiring, with regard to the race of the applicant. In reality, racial disparities in hiring rates might be a reflection of either discrimination or differences in the applicant's qualifications. That is, if past discrimination in the educational system has led to some applicants having lower educational achievements, by hiring based on educational achievements, the employer will perpetuate the effects of this discrimination. Under anti-discrimination law, however, as long as the employer makes the hiring decision based on educational achievements that are legitimately connected with the business needs---with no regard to race---no liability is attached. In fact, if the employer seeks to proactively address past societal discrimination, this could lead to reverse discrimination lawsuits~\cite{Mitchell2013AnCase}. Another concern is post-treatment bias that arises when we condition on intermediate outcomes that themselves may be the product of discrimination, resulting in biased estimates of the treatment effect~\cite{Rosenbaum1983TheEffects}. Since sensitive attributes such as gender are largely determined at the time of one's conception, arguably almost all measurable variables become post-treatment. Thus, conditioning on those variables may lead to misleading estimates of discrimination. Removing those variables, as proposed in~\cite{Kusner2017CounterfactualFairness,Nabi2018FairOutcomes}, also leaves little to no information for valid causal analysis. 
%Alternatively, many view attributes such as race or gender as social constructs that evolve over the course an individual's life. 
Finally, there are concerns about the epistemological and ontological aspects of interventions on social categories~\cite{Hu2020WhatsLearning,Kasirzadeh2021TheLearning}. Specifically,~\citet{Hu2020WhatsLearning} criticize DAG-based models that present the sensitive category as an isolated node, whereas in reality, these social categories are constituted by multi-dimensional complex features. % that make it difficult to define a proper intervention e.g., from Black to White.
%~\citet{Kasirzadeh2021TheLearning} propose a set of tenets which require a decision-maker to state implicit and unspecified assumptions about social ontology as explicitly as possible. 

%\newar{In this work, we echo the practical and epistemological challenges of manipulating social categories and argue that not every dataset (or decision domain) lends itself to valid causal analysis. We ground our discussion by focusing on two important aspects that are widely overlooked in the causal fairness literature. That is the importance of specifying the timing and nature of the intervention. We revisit the causal assumptions under the potential outcomes framework and discuss how the validity of those assumptions hinges on the nature and timing of the intervention. We provide substantive suggestion that help validate those assumptions in practice. 

In this work, we argue that not every dataset lends itself to a valid causal analysis. Precisely, we build on the work of~\citet{Greiner2011CausalCharacteristics} and lay out the conditions for the appropriate application of causal fairness under the ``potential outcomes framework.'' To this end, we highlight two important aspects that are widely overlooked in the causal fairness literature: the timing and nature of manipulations. In particular, we distinguish between interventions on immutable attributes and their perceptions. The idea that perceptions matter and can be manipulated is not new. For example, researchers have examined the effect of manipulated names associated with political speeches~\cite{Sapiro1981IfImages} and resumes~\cite{Bertrand2004AreDiscrimination}. Nevertheless, fair machine learning (ML) literature has paid little attention to its consequences for valid unfairness evaluation. We discuss the impact of this specification on the plausibility of causal assumptions and show how this view can help alleviate conceptual difficulties proposed in previous work. Additionally, we make the following contributions:

%Building on the long-standing empirical research on causality, we lay out the conditions for appropriate application of causal fairness under the ``potentialoutcomes framework.'' 

%We further demonstrate how we one can disentangle and mitigate different forms of discrimination and corresponding to different stages of a decision-making. 
%The idea that perceptions matter and can be manipulated is not new. For example, researchers have examined the effect of manipulated names associated with political speeches~\cite{Sapiro1981IfImages} and resumes~\cite{Bertrand2004AreDiscrimination}. Nevertheless, in the machine learning literature little attention has been paid to its consequences for valid causal inference for unfairness evaluation and mitigation. We make the following contributions:}
\begin{itemize}
\item We propose a causal framework to investigate and mitigate unfairness of a particular actor's decisions and provide a sequential imputation approach that adjusts the post-treatment variables before being used for inference.
%we highlight the importance of identifying the timing and nature of the intervention on social categories and its impact on conducting valid causal analysis including avoiding post-treatment bias. 
%\item We highlight the importance of the timing of an intervention in order to investigate a particular actor's behavior along a causal pathway. To the best of our knowledge, no prior work, in statistical or causal fairness literature, has aimed to isolate such  
%as well as a precise specification of the nature of the intervention for valid causal analysis, applicable to both experimental and observational settings. We further discuss the role of this specification for verifying the plausibility of causal assumptions and unfairness methods. 
\item We illustrate how causality can address the limitations of existing fairness criteria, including those that depend upon statistical correlations. In particular, we introduce the causal variants of the popular statistical criteria of fairness and make a novel observation that under the causal framework, there is indeed no fundamental disagreement between different fairness definitions. 
\item We conduct extensive synthetic and real-world experiments where we demonstrate the effectiveness of our methodology for unfairness evaluation and mitigation compared to common baselines. Our results indicate that the causal framework can effectively identify and remove disparities at various stages of decision-making.% and often achieves more favorable fairness-efficiency trade-offs. 
\end{itemize}
\section{Related Work}\label{sec:related}
There are two main frameworks for causal inference: structural causal models (SCM)~\cite{Hitchcock2001Causality:Inference}, and the potential outcomes framework (POF)~\cite{Rubin2005CausalOutcomes}. According to the SCM framework, the relationship between variables is modeled explicitly via a DAG. Causal queries are performed by intervening on the treatment variable and propagating its effect through the DAG. Under POF, each individual is characterized by a vector of potential outcomes which is only partially observed. Generally speaking, DAGs encode more assumptions about the relationships of the variables~\cite{SolonBarocasMoritzHardt2020FairnessOpportunities}. Consequently, POF has been more widely adopted in empirical research, including unfairness evaluation outside of ML~\cite{Bertrand2004AreDiscrimination,VanderWeele2014OnVariables}. %In this regard, causal inference on immutable attributes has appeared in several works~\cite{VanderWeele2014OnVariables}. 
Specifically, \citet{gaebler2022causal} consider the timing of perceiving one's race in a prosecutorial setting and issues of post-treatment bias. Their analysis, however, is limited to estimation and not fair prediction. In fair ML literature, the majority of works have adopted the DAG framework~\cite{Kusner2017CounterfactualFairness,Kilbertus2017AvoidingReasoning,Zhang2018FairnessFormula,Zhang2018EqualityApproach,Madras2019FairnessData,Chiappa2019PathspecificFairness} with a few exceptions that rely on POF~\cite{Nabi2018FairOutcomes,Khademi2019FairnessCausality}. \citet{Nabi2018FairOutcomes} and~\citet{Chiappa2019PathspecificFairness} study path-specific discrimination, where the former proposes to remove the descendants of the sensitive attribute under the unfair pathway and the latter aims to correct those variables. In many practical settings, it is critical to identify and correct for discrimination imposed by specific actors --- not an entire causal pathway. Our work makes this distinction by specifying the precise timing of hypothetical intervention. Further,~\citet{Khademi2019FairnessCausality} propose causal definitions of group fairness and show how these quantities can be estimated from data. However, their approach is limited to fairness evaluation. Also, it does not distinguish between pre- and post-treatment variables essential in many studies of discrimination~\cite{gaebler2022causal}. A thorough review of related work is provided in the appendix. 
\section{Fairness: Potential Outcomes Perspective}\label{sec:potential}
We consider a decision-making scenario where we aim to make a (discrete) decision $Y \in \{0, 1\}$. Each individual has a set of basic attributes $\bm X \in \mathbb{R}^{n}$ and belongs to the social category $A \in \{0, 1\}$ for which fair treatment is important. We assume $A$ is a single binary variable, however, our discussion can naturally be extended to cases where $A$ has more than two levels e.g., by comparing every group against a baseline group. It also applies when there is more than one sensitive attribute, such as the intersection of race and gender, by considering their joint values. Causality views the unfairness evaluation as a counterfactual inference problem. For example, we aim to answer questions of the type: \emph{What would have been the hiring decision, if the person had been perceived to be of a different gender?} %or \emph{Would the person have been arrested if they had been perceived to be a different race?} 
Such causal questions are centered around the notion of an \emph{intervention} or \emph{treatment} on a social category. Building on the POF framework of~\citet{Rubin2005CausalOutcomes}, we define $\bm Y = (Y(0), Y(1))$ as random variables describing the potential outcomes under different interventions on $A$. For any individual, only the value of $Y(A)$ corresponding to the perceived $A$ is observable. 
%Other outcomes are considered as counterfactual quantities and can be treated as missing variables. 

We take a decision-maker's perspective, considering how their perception of one's sensitive attribute may lead to different decisions. Through this conceptualization, it is possible for discrimination to operate at various points in time and across multiple domains throughout one's life. For example, in the context of racial discrimination, earlier work has recognized potential points of discrimination, including the labor market, education, etc.~\cite{Quillian2006MeasuringDiscrimination}. 
%See Section~\ref{sec:applications-appendix} in the Appendix for a summary. 
Consequently, we need to specify the point in time at which we wish to measure and mitigate unfairness. In causal terms, this is closely related to the notion of the timing of the intervention, i.e., the time at which one's sensitive attribute is perceived by an actor. To illustrate, consider a hiring scenario and suppose we are interested in evaluating whether the hiring decision is fair with respect to gender or not. We can investigate unfairness at different stages, e.g., from the first time an individual comes into contact with the company (e.g., resume review), progresses in the system (during interviews), or when the final decision is being made. We may even take a much broader perspective and investigate the effect of gender from the point an individual attends college and study how gender affects education and subsequently the opportunities in the job market. Indeed, as we expand our view, the causal inference problem becomes more intractable, but the conceptual framework remains valid.

\begin{figure}[t!]
    \centering
    \includegraphics[width = 0.45\textwidth]{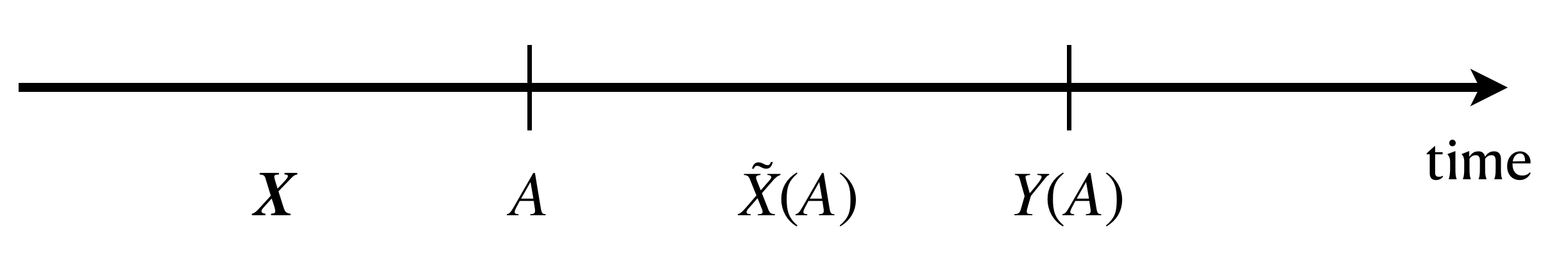}
    \caption{The time when sensitive attribute $A$ is perceived determines pre- and post-treatment variables. Here, $\bm X$ and $\tilde{X}(A)$ are pre- and post-treatment variables, respectively.}
    \label{fig:timeline}
\end{figure}

%Both the timing and nature of the intervention impact the conclusions we draw. For example, under an unfair educational system, a hiring decision that is based on educational achievements will perpetuate those biases, even if it treats individuals fairly given their educational background. Similarly, a discriminatory interview process will result in an unfair hiring decision. However, the difference is that in the latter, the company is now liable for the discriminatory behavior as it stems from a point in its decision-making process. The timing of the intervention is thus important in conducting causal analysis. In particular, consider an interview process which is discriminatory, resulting in unfair interview scores for a particular group. In our fairness evaluation, if we control for the interview score, we will find no relationship between gender and hiring decision. This observation is due to post-treatment bias cautioned in the causal inference literature which happens when variables that are fixed after the intervention are used in evaluating the treatment effect~\cite{Montgomery2018HowIt}. Figure~\ref{fig:timeline} demonstrates this over a decision-making timeline. 

Specifying the time of the intervention is also important for conducting sound causal analysis. Consider an interview process that is discriminatory, resulting in unfair interview scores for a group. In our fairness analysis, if we condition on  the interview score, we will find no relationship between gender and hiring decision. This is due to post-treatment bias cautioned in the causal inference literature that happens when variables fixed after an intervention are used in evaluating the treatment effect~\cite{Montgomery2018HowIt}. Once we fix the time of intervention, variables $\tilde{X} \in \mathbb R$ determined afterward are considered as post-treatment variables and in principle are affected by $A$ (Figure~\ref{fig:timeline}). Thus, we can postulate a vector $\tilde{\bm {X}} = (\tilde{X}(0), \tilde{X}(1))$. 
%the observed values of $\tilde{X}$ are determined as $\tilde{X} = \tilde{X}(0)(1-A) + \tilde{X}(1)A$.  

The nature of the intervention also influences the causal effect that we uncover. For instance, in the study conducted in~\cite{Bertrand2004AreDiscrimination}, the authors manipulated the names on the resumes to measure racial discrimination which only allowed them to capture the level of discrimination exhibited through the relationship between one's name and perception of race. Under a different manipulation, e.g., zip code of the applicant, the outcome of the study would have been different. In observational studies, where the analyst has no control over how an individual's sensitive attribute is perceived, a careful examination of perception mechanisms is still necessary. In particular, when there are several mechanisms affecting perceived attributes (e.g., name, zip code, etc.), it may be possible to study their joint effect by modeling the missing counterfactual values, under each mechanism.

Building on the above discussion, we define fairness as the absence of a causal effect due to a \emph{specific intervention} on a perceived sensitive attribute at a \emph{particular time}. 

%, as random variables with a distribution. The distribution for each individual's missing counterfactual value can then be represented by a stochastic mixture of distributions associated with each mechanism~\cite{Greiner2011CausalCharacteristics}. 

%\newar{Building on the above discussion, we define fairness as the absence of a causal effect due to a \emph{specific intervention} on a perceived sensitive attribute at a \emph{particular time}.}
\begin{definition}[Causal Parity]
A decision-making process achieves causal parity if $\E[Y(1) - Y(0)] = 0$. 
\end{definition}\label{def:CP}
Here, $\E[Y(1) - Y(0)]$ is the treatment effect of $A$ on $Y$. 
Since both potential outcomes $Y(0), Y(1)$ are not simultaneously observed for an individual, several assumptions are necessary to identify the treatment effect from the data. 
%We review the assumptions and discuss how the precise specification of the intervention helps establish their plausibility. 

%\subsection*{Causal Assumptions for Identification}\label{sec:identification}

\begin{assumption}[Stable Unit Treatment Value Assumption (SUTVA)]
The treatment that one unit (individual) receives does not change the potential outcomes of other units.
\end{assumption}

\begin{assumption}[Consistency]
The outcome $Y$ should agree with the potential outcome under the respective treatment, i.e., $Y=Y(0)(1-A)+Y(1)A$. In other words, there are no two ``flavors'' or versions of treatment such that $A=1$ under both versions but the
potential outcome for $Y$ would be different under the alternative versions.
\end{assumption}

\begin{assumption}[Positivity]
At each level of pre-treatment variables $\bm X$, the probability of receiving any form of treatment is strictly positive. Mathematically,
$$\mathbb{P}\left( \mathbb{P}(A = a \mid \bm X = \bm x) > 0 \right) = 1 \; \forall a \in \{0, 1\}, \bm x \in \mathcal X.$$
\end{assumption}

\begin{assumption}[Conditional Exchangeability]
Conditioned on pre-treatment variables $\bm X$, individuals receiving the treatment should be considered exchangeable with those not receiving
the treatment. Mathematically, 
$$
{\bm Y}, \tilde{X} \perp A \mid \bm X = \bm x \; \forall \bm x \in \mathcal X.
$$
%where ${\bm Y} = \left\{Y(0), Y(1)\right\}, \tilde{X} = \left\{\tilde{X}(0), \tilde{X}(1)\right\}$.
\end{assumption}

Earlier works have emphasized the criticality of these assumptions in determining the causal effects~\cite{Rubin2007BayesianRandomization}. We highlight their importance for fairness evaluation. SUTVA can be viewed as a non-interference assumption and depends very much on the problem and the choice of the decision-maker. For example, for a recruiter as the decider, one should think carefully whether the recruiter's decision to proceed with an application is independent from case to case. If a recruiter screened three exceptional female candidates in a row, maybe they would treat the fourth candidate's gender differently. On the one hand, they might think, ``wow, the women are so impressive,'' and that positive feeling would extend to regarding the fourth female candidate positively. On the other hand, they might think, ``I can't only hire women,'' and thus give a boost to subsequent male candidates. In this case, SUTVA is violated as historical data on other candidates influences the future candidates outcomes.
%If a recruiter screened three candidates in a row with exceptional resumes, they might raise their standards when judging the fourth resume. 
The consistency assumption means that there are no hidden levels of treatment variation. For example, if we only have binary gender data available, the assumption would be that the employer does not treat non-binary individuals in a different way from their labeled gender. This assumption emphasizes the importance of thinking carefully about the taxonomy of sensitive attribute labels and how they are collected. In order to accurately measure discrimination against non-binary individuals, we would need to have more nuanced gender perception labels and to conceptualize the treatment as having multiple levels. Consistency can also be viewed as treatment invariance, which we discussed in the previous section in the context of nature of intervention on social categories. When intervening on social categories such as race, it is possible that different factors contribute to the perception of one's sensitive attribute. Under consistency, one needs to make sure that there is sufficient data in order to capture the different levels of ``race.'' Without such nuanced data, it is still possible to measure the causal effect, but the interpretation changes, as the estimated causal effect is an average of multiple potential treatments. 
%\newar{In addition, when there are multiple decision-makers who may differ in how they perceive the sensitive attributes, this assumption can break. In settings where the assignment of decision-makers to individuals are completely at random or can be assumed that it is random, this assumption should still hold [Add references that confirm people indeed have different cognitive processes when it comes to gender or race]. }
The positivity assumption is also essential in order identify the treatment effect. It requires that there is not a complete overlap between the treatment assignment and pre-treatment variables. For example, if all of the women in a hiring pool have a PhD, and all of the men only have a Master's degree, then it is not possible to separate the effects of gender discrimination from the educational attainment. %Positivity is often easy to verify from the data once the pre-treatment variables $\bm X$ are determined. 

Conditional exchangeability implies that there are no hidden confounder and is in principle impossible to verify in observational studies. In practice, analysts include as many pre-treatment variables as possible to ensure that as many confounders as possible are accounted for. The goal is to ensure that once all of the  pre-treatment variables $\bm X$ are controlled for, the allocation of individuals between treatment and control is as close to random as possible. In the fairness setting, this would mean, after controlling for $\bm X$, the only systematic difference between the two groups is the perception of their sensitive attribute, allowing for an empirical estimate of the effect of discrimination. We note that we have the conditional independence of the counterfactuals of both $\tilde{X}$ and $\bm Y$, a key distinction between our work and earlier work that does not differentiate between pre- and post-treatment variables~\cite{Khademi2019FairnessCausality}. In multi-stage decision-making settings, we have more than one choice of decision-maker to study. In such cases, an analyst may have to balance the need to make the exchangeability assumption plausible against the desire to study a decision-maker's behavior early in the decision-making chain. Choosing the timing of the intervention towards the later interactions renders more measured variables pre-treatment which in turn can make the exchangeability assumption more plausible. However, by treating such variables as pre-treatment the analyst forgoes the detection of any prior discrimination. %In cases where there is sufficient data to detect discrimination starting from earlier stages of decision-making, it may be still important to pin down the different sources of discrimination throughout the decision-making process. For example, in the hiring context, suppose from the onset (the first interaction of the applicant with the company), a rich set of data about the applicant's qualifications is collected that allows an analyst to determine the hiring process is unfair towards to a group. In such a case, it is important to understand whether discrimination is attributed to the recruitment process, the interview stage or the final hiring process. %Additionally, there may be a long delay between the time of perceiving an individual's sensitive attribute and outcome. In this case, it may be helpful to use post-treatment variables to improve the precision~\cite{Athey2019ThePrecisely}. 

\section{Trade-offs under the Lens of Causality}\label{sec:tradeoff}
We now turn to an important aspect of our study which is to study common statistical criteria under the causal lens.  %We use hiring scenario as a running example, where $A$ denotes the gender of an applicant. 

\subsection*{Causal Fairness Definitions}
We center our discussion on the criteria with known impossibility results in the fair ML literature.
\begin{definition}[Conditional Causal Parity]
A decision-making process achieves conditional causal parity if $$\E[Y(1) - Y(0) \,\vert\, \bm X = \bm x] = 0 \, \forall \bm x \in \mathcal X. $$
%where $\bm X$ is the set of legitimate variables and $\mathcal X$ is the support. 
\end{definition}\label{def:CCP}
The above definition is closely related to conditional statistical parity which aims to evaluate fairness after controlling for a
limited set of ``legitimate'' factors~\cite{kamiran2013Quantifying}. The set of legitimate factors is typically assumed as given, e.g., by domain experts and 
significantly impacts the conclusions we draw. In contrast, in our definition $\bm X$ collects all the pre-treatment variables. Hence, once the nature of the intervention is explicitly defined, all remaining pre-treatment variables can be considered as legitimate since the main effect we aim to identify is the effect of the treatment. 
In this definition, we can substitute $Y$ with predicted outcome $\hat{Y}$. Subsequently, a classifier satisfies conditional causal parity if $\E[\hat{Y}(1) - \hat{Y}(0) \mid \bm X = \bm x] \, \forall \bm x \in \mathcal X.$ {Here, $\hat{Y}(A)$ is not an estimate of $Y(A)$, rather, it represents the the prediction in a world in which individuals are treated as either $A = 0$ or $A = 1$. Contrary to statistical criteria that evaluate fairness by comparing the average predictions between two groups in the observed data, the causal criteria compares two (counterfactual) worlds. We will elaborate further on this when we present the theoretical results. 
%To elaborate further, an intervention on $A$ at a point in time results in changes in the data generation process, affecting the post-treatment variables. For a predictor that takes post-treatment variables as input, this can change the input features values which in turn affects the predicted outcome. For this reason, a predictor that only uses the pre-treatment variables, automatically satisfies conditional causal parity. However, in practice, this leaves little information for a precise prediction.
}

%As described in Sec~\ref{sec:potential}, if one or more components of $\bm X$are fixed after perception of one's sensitive attribute, i.e., post-treatment, the above condition does not provide a causal estimands which can result in misleading conclusions. 

%The set $\bm X$ can be domain-specific, Assumptions~\ref{assum:SUTVA},~\ref{assum:conditional-exchangeability} and~\ref{assum:coverage} are required to identify the counterfactual terms from observational data. %The counterfactual quantifies used in the above definitions are not readily observed in the data. Potential outcomes framework gives the necessary assumptions that allow us to identify these quantities from data which we discuss next. 
\begin{definition}[Causal Equalized Odd]
A predictor $\hat{Y}$ satisfies causal equalized odds if:
\begin{align*}
    \P(\hat{Y}(0) = 1 \mid Y(0) = 1) = \P(\hat{Y}(1) = 1 \mid Y(1) = 1) \\
    \P(\hat{Y}(0) = 1 \mid Y(0) = 0) = \P(\hat{Y}(1) = 1 \mid Y(1) = 0)
\end{align*}
\end{definition}

The above definition is the causal counterpart of equalized odds proposed by~\citet{Hardt2016EqualityLearning}. 
%which requires a predictor $\hat{Y}$ to be independent of $A$ given the observed outcome $Y$
It states that the probability of receiving a true positive and false positive prediction in worlds where everyone is treated as $A=0$ or $A=1$ should be the same. Therefore, an individual does not have any preferences to be in either of these worlds. %Next, we define the causal variant of calibration~\cite{Kleinberg2017InherentScores}. Calibration is defined in the context of risk scores. 
\begin{definition}[Causal Calibration]
A risk score $S \in \mathcal{S}$ is well-calibrated within groups if:
\begin{align*}
    & \P(Y(0) = 1 \mid S(0) = s) = \P(Y(1) = 1 \mid S(1) = s)\; \forall s \in \mathcal S. 
    \end{align*}
\end{definition}
%Causal calibration can be equivalently written in terms of expectations as: $\E[Y(1) - Y(0) \mid S = s] = 0\; \forall s \in \mathcal S$. 
In words, for a risk score to be calibrated, the proportion of positive outcomes in either worlds should be equal. %Subsequently, we can define causal positive predictive parity.
\begin{definition}[Causal Positive Predictive Parity] 
A predictor $\hat{Y}$ satisfies causal positive predictive parity if:
\begin{align*}
    & \P(Y(0) = 1 \mid \hat{Y}(0) = 1) = \P(Y(1) = 1 \mid \hat{Y}(1) = 1).
    \end{align*} 
\end{definition}
Causal predictive parity has a similar interpretation as causal calibration. It requires the rate of positive outcomes in either worlds to be the same when the prediction is positive. %Thus, an individual does not feel being discriminated against since the rate of positive outcome is equal in both worlds. 

\subsection*{Trade-offs among Causal Criteria of Fairness}
We now investigate two main impossibility results among the statistical fairness criteria and demonstrate that there is no fundamental disagreement between their causal variants. Proofs of this section are deferred to the appendix.

\noindent\textbf{Causal Parity and Conditional Causal Parity.}
As shown by the Berkeley college admission study, statistical parity and conditional statistical parity may not be simultaneously satisfied on a dataset. According to the study, female applicants were admitted at a lower rate compared to the male~\cite{Bickel1975SexBerkeley}. However, after controlling for department choice, the disparity vanished. 
\begin{observation}
There exists a distribution $p(\bm X, A, Y)$ such that conditional statistical parity does not imply statistical parity, i.e., $\E[Y \mid \bm X = x, A = 1] - \E[Y \mid \bm X = x, A = 0] = 0 \; \forall \bm x \in \mathcal X \centernot\implies \E[Y \mid A = 1] - \E[Y \mid A = 0] = 0.$
\label{obs:impossible-parity}
\end{observation}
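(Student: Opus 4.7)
The plan is to establish this observation by exhibiting an explicit distribution in which the conditional expectations agree across groups but the marginals do not — essentially an instance of Simpson's paradox in the spirit of the Berkeley admissions example already cited. Since the claim is a non-implication, a single counterexample suffices.

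First I would identify the structural reason behind the gap. If $\E[Y\mid \bm X=\bm x, A=1]=\E[Y\mid \bm X=\bm x, A=0]=f(\bm x)$ for every $\bm x$, then by the tower property
\[
\E[Y\mid A=a] \;=\; \sum_{\bm x} f(\bm x)\,\P(\bm X=\bm x\mid A=a),
\]
so any discrepancy between $\E[Y\mid A=1]$ and $\E[Y\mid A=0]$ must come from the conditional distributions $\P(\bm X\mid A)$ differing between groups while $f$ is non-constant. This directly suggests the recipe for the counterexample: choose a binary covariate whose distribution depends strongly on $A$ and whose associated admission rate varies with the covariate.

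Concretely, I would take $\bm X\in\{1,2\}$ (say, two departments), $A\in\{0,1\}$, and $Y\in\{0,1\}$. Set the common admission rates $f(1)=0.8$ and $f(2)=0.2$, and set the group-conditional covariate distributions as $\P(\bm X=1\mid A=1)=0.8$, $\P(\bm X=1\mid A=0)=0.2$, with $P(A=0)=P(A=1)=1/2$. Then by construction the conditional parity condition holds with equality for each $\bm x$, while a one-line computation gives $\E[Y\mid A=1]=0.68$ and $\E[Y\mid A=0]=0.32$, so statistical parity is violated by $0.36$. This establishes the non-implication.

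There is essentially no hard step here: the construction is a textbook Simpson's paradox example, and the main care is only to verify (i) that the joint distribution $p(\bm X,A,Y)$ is well defined and consistent (which is immediate, since one can read off $\P(\bm X,A)$ and then specify $\P(Y\mid \bm X,A)$ via $f$), and (ii) that both the conditional equality and the marginal inequality indeed hold under the chosen numbers. I would conclude by noting that this is precisely the mechanism behind the Berkeley admissions phenomenon referenced in the preceding paragraph, motivating why purely associational criteria cannot adjudicate between the two notions without further causal assumptions.
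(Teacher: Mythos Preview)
Your proposal is correct and follows essentially the same approach as the paper: the paper does not give a formal proof of this observation at all, but simply cites the Berkeley admissions study (Simpson's paradox) in the preceding paragraph as the demonstrating example. Your explicit two-department construction with concrete numbers is a faithful, and in fact more rigorous, instantiation of exactly that idea.
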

In contrast, it is straightforward to show that conditional causal parity implies causal parity which suggests an inherent compatibility between the two fairness notions. 
\begin{proposition}
Conditional causal parity $\Rightarrow$ causal parity. %Mathematically, 
% $$
% \E[Y(1) - Y(0) \mid \bm X = x] = 0 \; \forall \bm x \in \mathcal X \implies \E[Y(1) - Y(0)] = 0.
% $$
\label{prop:possible-causal-parity}
\end{proposition}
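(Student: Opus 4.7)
The plan is to apply the tower law of expectation directly to the conditional causal parity condition. Because $\bm X$ is a collection of pre-treatment variables, the joint law of potential outcomes $(Y(0), Y(1))$ admits a well-defined conditional expectation given $\bm X$, and these conditional expectations can be integrated against the single marginal law $p(\bm X)$. Concretely, I would write
\begin{align*}
\E[Y(1) - Y(0)] &= \E_{\bm X}\bigl[\,\E[Y(1) - Y(0) \mid \bm X]\,\bigr] \\
&= \E_{\bm X}[0] \;=\; 0,
\end{align*}
where the first equality is the tower property and the second invokes the hypothesis that the inner conditional expectation vanishes pointwise on $\mathcal X$. This is essentially a one-line calculation, so the real content of the statement lies in understanding why the analogous implication fails in the statistical case of Observation~\ref{obs:impossible-parity}.

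The conceptual point worth emphasizing is the contrast with the observational formulation. In Observation~\ref{obs:impossible-parity}, one conditions separately on $\{A = 1, \bm X = \bm x\}$ and $\{A = 0, \bm X = \bm x\}$, and marginalizing over $\bm x$ must be done under the within-group distributions $p(\bm X \mid A = a)$, which can differ across groups; this mismatch is precisely the mechanism behind Simpson's paradox and the Berkeley admissions example. Under the potential outcomes framework adopted in Section~\ref{sec:potential}, both $Y(0)$ and $Y(1)$ are defined for every unit, so the marginalization is against a single pre-treatment law $p(\bm X)$, and no reweighting artifact can arise. This is exactly what makes the causal criteria compatible where the statistical criteria are not.

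No genuine obstacle is expected. The only subtlety is verifying that $\bm X$ is truly pre-treatment, so that its distribution is invariant across the hypothetical intervention on $A$; this is built into the framework of Section~\ref{sec:potential}, where the timing of the intervention is fixed before $\bm X$ is realized. Note that neither conditional exchangeability nor positivity is invoked, since the proposition is a structural statement about potential outcome distributions rather than an identifiability claim about observed data. After the calculation, I would close with a one-sentence remark that the converse direction (causal parity $\Rightarrow$ conditional causal parity) does not hold, as averages can vanish marginally while failing pointwise in $\bm x$, so the proposition captures a strict strengthening in the expected direction.
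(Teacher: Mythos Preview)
Your proof is correct and matches the paper's own argument essentially verbatim: both apply the tower property to write $\E[Y(1)-Y(0)] = \E_{\bm X}\bigl[\E[Y(1)-Y(0)\mid \bm X]\bigr] = \E_{\bm X}[0] = 0$. Your surrounding discussion (contrast with the statistical case, no need for exchangeability/positivity, failure of the converse) is accurate and aligns with the paper's remarks following the proposition.
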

% \begin{proof}
% The proof follows simply from taking the expectation over $\bm X$. 
% \end{proof}
The proof follows simply from taking the expectation over $\bm X$. The intuition behind the above result is that $\E[Y \mid A]$ merely measures the statistical dependence between $Y$ and $A$ and does not differentiate between different sources of dependence, e.g., female students applying for more competitive departments than male students, or a discriminatory admission process. We note that conditional causal parity is a more stringent requirement than causal parity and the reverse implication does not generally hold true. 

%In contrast, through an explicit specification of treatment $A$, causal parity is able to capture the source of disparities. 
%Conditional causal parity is a more stringent requirement than causal parity and the reverse implication does not generally hold true in Proposition~\ref{prop:possible-causal-parity}. 

\noindent\textbf{Causal Positive Predictive Parity and Causal Equalized Odds.}
It is well-known that one can not achieve positive predictive parity or calibration together with equalized odds simultaneously unless either the base rates $\P(Y \mid A = a)$ are equal or the classifier is perfect~\cite{Kleinberg2017InherentScores,Chouldechova2017FairInstruments}. 
Here, we show no such restrictions are necessary for their causal variants.  
%We first define $f:\mathcal S \rightarrow \{0,1\}$ as a mapping from the risk score $S$ to binary prediction $\hat{Y}$. For example, $f(S) = \mathbb{I}(S > \theta)$ classifying the data points based on a threshold. We first prove the following result:
% \begin{lemma}
% Causal Calibration implies Causal Parity, i.e., $\E[Y(1) - Y(0) \mid \bm S = s] = 0 \; \forall s \in \mathcal S \implies \E[Y(1) - Y(0)] = 0.$
% \end{lemma}
% \begin{proof}
% The proof follows simply from taking the expectation over different values of $\bm s \in \mathcal S$. 
% \end{proof}
\begin{theorem}
%Given the observed distribution $p(\bm X, \tilde{X}, Y)$, if $\E[Y(1) - Y(0)] = 0$ (causal parity), there exists a classifier, trained on data derived from $p$, that satisfies causal positive predictive parity and causal equalized odds simultaneously. 
If $\E[Y(1) - Y(0)] = 0$ (causal parity), there exists a classifier that satisfies causal positive predictive parity and causal equalized odds simultaneously. 
\label{thm:main-theorem}
\end{theorem}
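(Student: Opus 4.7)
The plan is to reduce the problem to the well-known Chouldechova identity relating positive predictive value to true and false positive rates and base rates, and then exploit the fact that causal parity forces the two counterfactual ``base rates'' to coincide. Since causal parity is stated on a binary outcome, the first step is to observe that $\E[Y(1)-Y(0)]=0$ is equivalent to $\P(Y(0)=1)=\P(Y(1)=1)$, a common value I will call $p$. This is the causal analogue of the equal-base-rates condition that is known to make calibration and equalized odds compatible in the ordinary statistical setting; once I have this, the impossibility obstruction essentially evaporates.

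Next, for any candidate classifier $\hat Y$ I will introduce per-world rates $t_a := \P(\hat Y(a)=1 \mid Y(a)=1)$ and $f_a := \P(\hat Y(a)=1 \mid Y(a)=0)$ for $a\in\{0,1\}$, so that causal equalized odds is exactly the statement $t_0=t_1$ and $f_0=f_1$. Applying Bayes' rule in each counterfactual world gives
\begin{equation*}
\P\bigl(Y(a)=1 \mid \hat Y(a)=1\bigr) \;=\; \frac{t_a\, p_a}{t_a\, p_a + f_a\,(1-p_a)},
\end{equation*}
where $p_a=\P(Y(a)=1)$. Under causal parity $p_0=p_1=p$, and if the classifier satisfies causal equalized odds then the right-hand side coincides for $a=0$ and $a=1$, yielding causal positive predictive parity for free. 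So the content of the theorem reduces to the existence of a classifier that satisfies causal equalized odds.

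For existence, I would exhibit a concrete classifier rather than argue abstractly. The cleanest choice is a randomized classifier $\hat Y\sim\mathrm{Bernoulli}(q)$ for some $q\in(0,1)$, drawn independently of $(\bm X, A, Y(0), Y(1))$, so that $\hat Y(0)$ and $\hat Y(1)$ have the same distribution and are independent of the potential outcomes. Then $t_0=t_1=f_0=f_1=q$ trivially, so causal equalized odds holds and (combined with the previous step) causal positive predictive parity holds with common value $p$. This completes the existence claim and hence the theorem.

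The main obstacle I anticipate is less about the algebra and more about the classifier being substantive rather than trivial: the random or constant classifier satisfies both criteria but is not useful. A more satisfying version of the argument would replace the step above by a learning-style construction, e.g.\ a deterministic function of the pre-treatment covariates $\bm X$ whose counterfactual TPR and FPR coincide; this requires leveraging conditional exchangeability (Assumption 4) to argue that predictions made from $\bm X$ behave symmetrically across the two counterfactual worlds, which is subtler than the base-rate argument above. For the statement as written, however, the trivial classifier already suffices, so the Bayes-rule reduction plus one explicit example is a complete proof.
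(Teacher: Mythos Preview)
Your proposal is correct and follows essentially the same approach as the paper: both derive the Chouldechova-style identity linking positive predictive value, TPR, FPR, and base rate via Bayes' rule, then use causal parity to equate the two counterfactual base rates so that any two of the three conditions force the third. Your version is arguably a bit more complete in that you explicitly exhibit a classifier (the trivial randomized one) to close the existence claim, whereas the paper's formal proof stops at ``any two criteria imply the third'' and supplies the concrete example only in the surrounding text (a classifier depending solely on pre-treatment $\bm X$).
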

In fact, an example of a classifier that satisfies all causal fairness criteria is one that uses only the pre-treatment variables $\bm X$. That is because, these variables precede the perception (treatment). Thus, they are unaffected by it.

We now conclude that a classifier that satisfies conditional causal parity satisfies the remaining causal criteria. This result indicates that conditional causal parity is the most stringent fairness requirement, sufficient to ensure other criteria are enforced. It is also in sharp contrast with known impossibility results in fair ML literature. We note that causal parity, i.e.,  $P(Y(0) = 1) = P(Y(1) = 1)$, is satisfied if decisions are made regardless of one's group membership and is different from the equal base rate assumption which does not necessarily hold in many applications. Finally, evaluating and enforcing these criteria is challenging partly due the missing counterfactual. Next, we discuss our approach for unfairness evaluation and mitigation using data imputation.

\subsection*{Fairness Evaluation}
%\newar{For more precise evaluation we may need to project the effect of discrimination at current time to future outcomes. }
%Additionally, the above assumptions are strictly weaker than the assumptions needed for identifying path-specific fairness as discussed in~\cite{Nabi2018FairOutcomes}.  
Various approaches exist in order to estimate the causal effect~\cite{imbens2015causal}. In this work, we focus on regression modeling as it fits well with the imputation framework. We first consider a case where post-treatment variables are absent. Under causal assumptions, it holds:
%\begin{equation}
\begin{align*}
& \E[Y(1) - Y(0)] = \E\left[\E\left[Y(1) - Y(0) \mid \bm X = \bm x\right]\right] = \\
& \E\left[\E\left[Y \mid \bm X = \bm x, A = 1\right] - \E\left[Y \mid \bm X = \bm x, A = 0\right]\right],
 %\label{eq:pre-treatment-evaluation}
\end{align*}
%\tau = \E[Y(1) - Y(0)] = \E\left[\E\left[Y(1) - Y(0) \mid \bm X = \bm x\right]\right] = \E\left[\E\left[Y \mid \bm X = \bm x, A = 1\right] - \E\left[Y \mid \bm X = \bm x, A = 0\right]\right], 
%\label{eq:pre-treatment-evaluation}
%\end{equation}
which can be estimated from observational data regressing outcome $Y$ on $\bm X$ and $A$. When post-treatment variables $\bm X$ are present, simply conditioning on those variables will introduce bias. Instead, we should take their dependence on $A$ into account, i.e., $\bm X(A)$. With a slight abuse of notation, we can write the treatment effect of $A$ as $\E[Y(1, \tilde{X}(1)) - Y(0, \tilde{X}(0))]$, where potential outcomes are indexed by both the treatment and the post-treatment counterfactuals. In the mediation literature, this quantity is known as \emph{total effect}~\cite{Imai2010IdentificationEffects}.
%Given $\tilde{X}$, one can condition on $\tilde{X}(0)$, or $\tilde{X}(1)$ or both jointly to estimate the treatment effect. 
% \begin{equation}
%     \E\left[Y(1, \tilde{X}(1)) - Y(0, \tilde{X}(0))\right] = \E\left[\E\left[Y(1) - Y(0) \mid \bm X = \bm x, \tilde{X}(0) = \tilde{X}_0, \tilde{X}(1) = \tilde{X}_1\right]\right].
%     \label{eq:post-treatment-evaluation}
% \end{equation}
%The subgroup of individuals homogeneous in their joint potential outcomes $\left(\tilde{X}(0), \tilde{X}(1)\right)$ are referred to as principal stratum in the causal inference literature. Subsequently, the quantity $\E[Y(1) - Y(0) \mid \bm X = \bm x, \tilde{X}(0) = \tilde{x}_0, \tilde{X}(1) = \tilde{x}_1]$ is known as principal causal effect~\cite{Frangakis2002PrincipalInference}. 
Estimating the total effect poses a considerable identification challenge as it depends on four $\tilde{X}(0), \tilde{X}(1), Y(0, \tilde{X}(0)), Y(1, \tilde{X}(1))$ counterfactuals which are not simultaneously observed for any individual. To tackle this problem, we draw on well-established ideas in the causal inference literature to develop a sequential imputation technique similar to those used in social sciences to evaluate the long-term impact of policy shifts~\cite{Xiang2015AssessingSystem}.
%. we propose to use imputation~\cite{Rubin1996MultipleYears} which is commonly used in causal inference literature to assign values to unobserved variables in the data. 
We illustrate using an example with only one post-treatment variable $\tilde{X}$. First, we learn a model by regressing $\tilde{X}$ on $\bm X$ and $A$. We then use this model to impute the counterfactuals $\tilde{X}(A)$ by substituting the corresponding value of $A$. Next, we learn a regression model, by mapping $\bm X$, observed $\tilde{X}$ and $A$ to $Y$. To obtain potential outcomes $Y(A)$, we substitute $A$ and $\tilde{X}(A)$ with their respective values. The process can be generalized to multiple post-treatment variables by repeating the sequential imputation conditioned on previously observed variables. 
%Similar sequential imputation techniques have been used in causal inference literature in order to evaluate the long-term impact of policy shifts~\cite{Xiang2015AssessingSystem}. 
%In Section~\ref{sec:experiments}, we demonstrate our approach in a hiring scenario. 
%For example, consider a hiring scenario where we are interested in investigating fairness starting from the interview process. In this case, interview score is a pre-treatment variable. In order to use interview score for inferences on $\tau$, we first impute the counterfactual  obtain the potential outcomes $\bm Y$. 
%If post-treatment variables are observed over multiple steps e.g., first interview score, second interview score, etc., we   

\subsection*{Unfairness Mitigation}
{Our approach transforms the inference problem on
$p(\bm X, \tilde{X}, Y)$ into an inference problem on another distribution $p^{\star}(\bm X, \tilde{X}, Y)$, in which the post-treatment variables, including the outcome of interest are \emph{adjusted}. In a sense, $p^{\star}$ represents a ``fair world'' in which individuals are treated regardless of their $A$ value. The idea of adjusting variables affected by sensitive attributes has been recently investigated in the context of mitigating path-specific effects under the DAG framework~\cite{Nabi2018FairOutcomes,Chiappa2019PathspecificFairness}. In our work, we are interested in mitigating unfairness attributed to a specific actor's decision at a particular point in time, rather than a specific entire causal path.

%implements conditional causal parity by \emph{correcting} post-treatment variables, including the outcome of interest. The correction aims at eliminating the discriminatory effects in the observations while retaining useful information. 

%The idea of adjusting downstream variables, affected by sensitive attributes, has been recently investigated in the fair ML literature and in the context for mitigating path-specific effects under the DAG framework~\cite{Chiappa2019PathspecificFairness}. In this work, we are interested in mitigating unfairness attributed to a specific actor's decision-making process, rather than an entire causal path. 

%Intuitively, our approach is based on the assumption that in a fair world, everyone is treated with no regard to their group membership. 

To adjust the downstream variables, we develop a pre-processing approach that pools from counterfactuals $\tilde{X}(A)$ and $\bm Y(A)$ to create new counterfactuals that are equal for both groups. To illustrate, we first consider a setting with no post-treatment variables and assume $\E[Y(1) - Y(0)] \neq 0$. Let $A = 0$ be the baseline group. The baseline group can be viewed as either the majority group or a historically advantaged group. If we had access to $Y(0)$ for every individual in the population, we could correct the $Y(1)$ by setting it to $Y(0)$, enforcing $\E[Y(1) - Y(0)] = 0$. Subsequently, we could learn a fair classifier as e.g., $\arg\max_y \P(Y(0) = y \mid \bm X = \bm x)$. This way we are effectively eliminating decision-maker's unfavorable attitude towards membership in group $A=1$. In the presence of post-treatment variables, we employ a similar approach, after adjusting the post-treatment variable in a similar fashion. Precisely, the classifier models $\arg\max_y \P(Y(0) = y \mid \bm X = \bm x, \tilde{X}(0) = \tilde{x}), $. A key challenge with this approach is that $Y(0)$ values are not observed for every individual. Similar to the unfairness evaluation section, we leverage imputation from causal inference literature to tackle this problem~\cite{Rubin1996MultipleYears}. We note that the pooling can be conducted in various ways, including using the counterfactuals corresponding to $A = 1$, $\max(Y(0), Y(1))$, or other complex functions of the outcomes under each counterfactual world. We leave the empirical investigation of either of these methods to future work.  

\section{Empirical Results}\label{sec:experiments}
We compare our approach against three commonly used unfairness mitigation methods from the literature: a pre-processing method ReW~\cite{Kamiran2012DataDiscrimination}, 
%which is a pre-processing technique that weights the examples in each (sensitive group, label) combination differently to ensure fairness before classification, 
an in-processing method PRem~\cite{kamishima2012fairness}, 
%which is an in-processing technique that adds a discrimination-aware regularization term to the learning objective and 
and a post-processing method ROC~\cite{kamiran2012decision}.
%which is a post-processing technique that gives favorable outcomes to unprivileged groups and unfavorable outcomes to privileged groups in a confidence band around the decision boundary. 
We use the implementations in \texttt{AIF360} library~\cite{bellamy2019ai}. We study the effect of timing of intervention on fairness conclusions and present evaluation results with respect to both statistical and causal criteria presented in Section~\ref{sec:tradeoff}. Existing causal approaches require a DAG model as input and typically remove path-specific effects. Thus, they do not apply in our setting. %Finally, we evaluate the results with respect to both statistical and causal criteria presented in Section~\ref{sec:tradeoff}.

\paragraph{Synthetic example.}
%\newar{We evaluate a Vanilla model according to both the statistical and causal criteria and show that statistical criteria may give an erroneous picture on the discrimination in the existing data.}
We begin with a stylized hiring scenario. We consider a decision-making process that involves two stages: interview and final hiring decision. 
%We synthesize data so that we can compare fairness definitions in a setting where we observe the potential outcomes. 
We use $A$ to represent gender, which we draw from a Bernoulli distribution $\textit{Bern}(0.75)$ with the majority class being male $A=1$. An individual's qualification is described by a random variable $X$ drawn from a normal distribution $\mathcal{N}(2\alpha(A - 0.5), 1)$, where $\alpha$ controls the difference in the average qualifications between genders. Each candidate has a score $S$ which we model as a binary variable such that $\mathbb P(S = 1) = \sigma\left(2X + 2\beta (A - 0.5)\right), $ where $\sigma(z) = 1/(1+e^{-z})$ is the logistic function and $\beta \geq 0$ determines the level of discrimination in $S$, e.g., when $\beta > 0$ being a male $A = 1$ increases one's probability of receiving a higher score. Subsequently, a hiring decision $Y$ is made according to the probabilistic model $\mathbb{P}(Y = 1) = \sigma\left(2X + S +  2\gamma(A - 0.5)\right)$, with $\gamma \geq 0$ controlling the level of discrimination in $Y$ for a fixed $X, S$. The counterfactual values, for both $S(A)$ and $Y(A)$, can be obtained by substituting the corresponding value of $A$ in the above models. We note that for analysis before the interview stage, we use the counterfactual values of $S(A)$ in order to obtain the potential outcomes of $Y(A)$. 
%The potential outcomes of $Y$ depend on the timing of the intervention. In other words, for intervention at the interview stage, the potential outcomes can be obtained by substituting the value of $A$ and the corresponding (counterfactual) $S$. At hiring stage, the value of $A$ needs to be substituted and the remaining values are input as observed.   
%The vector of potential outcomes, for both $S$ and $Y$, can be obtained by substituting the respective value of $A$ in the model. 
We generate 100,000 samples according to the described model and use logistic regression 
%neural network with two hidden layers of size 8 
to impute the missing counterfactuals. We study the effect of timing of the intervention, pre and post-interview, on our conclusions across different values of $\alpha, \beta$ and $\gamma$. The full set of results is presented in the appendix.

\begin{figure*}[t]
    \centering
    \includegraphics[width=0.98\textwidth]{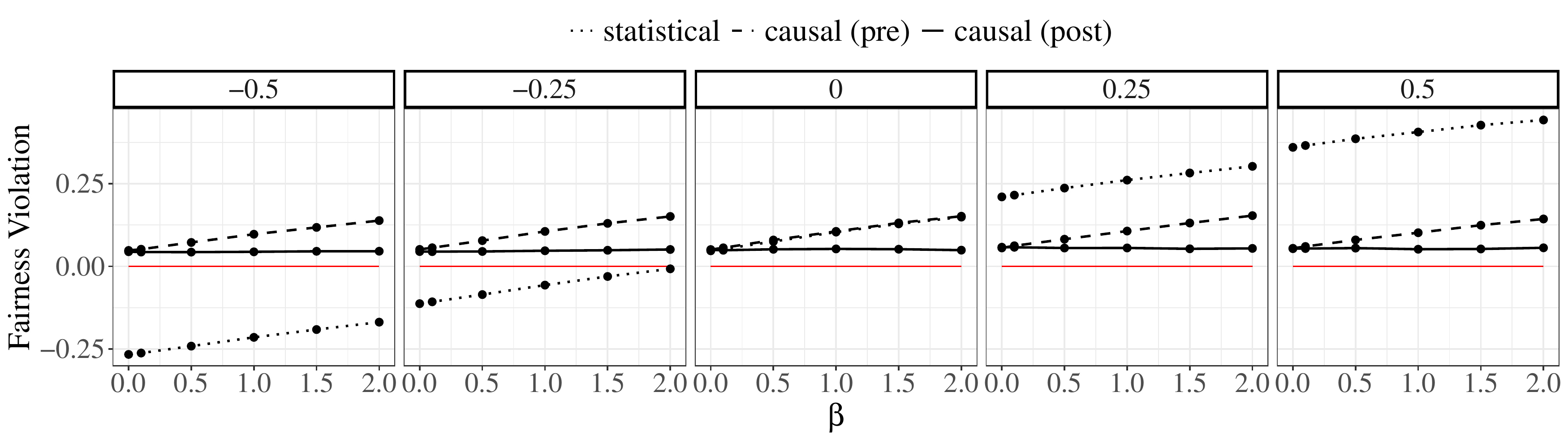}
    \caption{Stylized hiring scenario: line types indicate the unfairness evaluation approach. Causal (pre) and causal (post) measure pre- and post-interview causal disparity, respectively, and are compared against the statistical disparity line. Each panel corresponds to a different $\alpha$. The $y-$axis exhibits violation of parity fairness criteria across values of $\beta$ and $\gamma = 0.2$.}
    %which controls the disparity in the interview score due to one's gender. 
    %The error bars show 95\% confidence interval. 
    %It can be observed that causal (post) stays relatively flat across different $\beta$ values which is due to the fact that it only captures discrimination at hiring stage. Causal (pre), however, changes monotonically with $\beta$. Both approaches show are above the zero line, indicating that the hiring process indeed favors male gender. On the contrary, the statistical criteria erroneously shows discrimination against male (negative values) for small values of $\beta$ which is due to the fact that these criteria are not capable of appropriately accounting for the relevant differences (baseline job qualifications) across individuals. }}
    \label{fig:synthetic}
\end{figure*}

%According to our causal framework, we need to specify the point in time from which the effect of gender needs to be assessed. There are two possibilities: after interview is conducted or before the interview (as one may be concerned about an unfair interview process). %We generate 100,000 data points $(X, A, S(0), S(1), Y(0, S(0)), Y(1, S(1)))$ according to the process explained above. 
%For post-interview fairness evaluation we can use the observed $S$ values as the score is a pre-treatment variable. However, when evaluating fairness before the interview, the score becomes post-treatment. In order to impute the missing counterfactual score values $S(A)$, we use logistic regression to model $\P(S = 1 \mid \bm X = \bm x, A = a), \; a \in \{0, 1\},$ from which we sample (10 samples). We use a second logistic regression $\P(Y = 1 \mid \bm X = \bm x, S(a) = s, A = a), \; \forall a,s \in \{0, 1\}$ to impute $Y(A, S(A))$ values. This approach is based on multiple imputation in the causal inference literature~\cite{Rubin1996MultipleYears}. We then use these counterfactual values in the expression that evaluates the treatment effect of $A$. 

%We compare our causal criteria against statistical fairness definitions, where we measure the fairness violation of a logistic regression model trained to predict $Y$ using observed values of $X, S$ and $A$. Figure~\ref{fig:synthetic} depicts a summary of results. 

Figure~\ref{fig:synthetic} depicts a summary of unfairness evaluation findings across different values of $\alpha$ and $\beta$. First, we observe the post-interview causal plot remains almost flat across different values of $\beta$ exhibiting a constant causal disparity (due to $\gamma \neq 0$), independent of prior discrimination in the interview stage. This suggests that early discrimination will remain undetected if one chooses a later time as the point of intervention. Pre- and post-interview lines only intersect at $\beta = 0$ and pre-interview fairness violation increases monotonically with $\beta$. Statistical line exhibits  significantly different results. For example, when $\alpha = -0.5$, The statistical line lies below the causal ones which suggests that it underestimates the true level of discrimination. The reason is that when $\alpha < 0$, males' qualification is lower than females on average. However, since $\beta, \gamma > 0$ the interview score and the final decision are in favor of male candidates. Since statistical parity fails to disentangle these opposing effects, it results in lower estimates of unfairness. On the other hand, when $\alpha > 0$, these effects reinforce each other leading to an over-estimation of unfairness. Only when $\alpha = 0$, do statistical parity and causal parity (causal-(pre) line) match which indicates the sensitivity of statistical criteria to baseline differences between groups (average qualifications).

We now analyze different mitigation approaches. We set $\alpha = 0, \beta = 0.25$ and $\gamma = 0.2$ to simulate a scenario where all the disparities are due to the discrimination. We then compare the ability of different approaches in mitigating such unfair disparities. Due to the inherent compatibility of the causal criteria, we mainly focus on causal parity and defer the full set of results to the appendix. The casual parity violation of the baseline models are: ReW (0.074), PRem (0.086) and ROC (0.140) compared to the 0.105 causal disparity in the data. These numbers represent the outcome disparities an individual receives if they were of different gender, with the male being the favorable gender. These disparities are because once the gender changes, so does any downstream variables that these models depend on. Causal (pre) and causal (post) on the other hand, adjust the downstream observations by creating a ``fair world'' resulting in zero disparities between gender by design. We also provide a summary of the results for different statistical metrics in Table~\ref{tab:mitigation}. Notably, causal (pre) outperforms baselines w.r.t to the parity criteria (0.009). Causal (post) exhibits higher disparity which is due to the fact that it only mitigates any unfairness post interview. Statistical methods exhibit varying behavior, showing less effectiveness in reducing outcome disparity compared to other metrics. This is in part due to the fact that these models use features that are themselves affected by the sensitive attribute. Compared to other metrics, our model does not necessarily outperform all other models, however, it should be noted that these metrics rely on the observed unfair labels. Thus, they are not always a reliable measure for unfairness evaluation.

\paragraph{Police Stop and Search. }
We also test our approach on standardized police stop data made available by Stanford Open Policing Project~\cite{pierson2020large}. 
Each row represents a stop. Features include time, reason or location of the stop, driver's race and gender, as well as intermediate outcomes such as whether or not the search was conducted, contraband was found or an arrest was made. The multiple decision points present in this dataset make it possible to study the impact of the timing of the intervention on the causal analysis. We focus on the racial disparities. Details on the data preparation as well as a discussion on the plausibility of the causal assumptions are presented in the appendix. 

%We choose a slice of data from New Orleans 2017 as it had fewest columns with more than 70\% of the values missing. We further limit our analysis to vehicle stops (the data also contains person stops) and drivers whose race is either {Black} or {White}. We modify the decision on search to include both frisk and vehicle search which are recorded in separate columns. That is because the order in which either one of the actions has been performed is not available, therefore, it is not immediately clear how one can treat them as separate decisions. We are primarily interested in evaluating causal effect of race on arrest decision. It is natural to choose the stop stage as the time of intervention. In this case, decisions such as being searched, search outcome and being arrested become post-treatment variables. This is not the case if we choose arrest stage as the point of intervention. We also present results for the arrest stage for comparison. Determining the nature of this (hypothetical) intervention, however, is more challenging. For example, if the driver's license contained the race of the driver, we could possibly assume that the measured causal effect is associated with an intervention that modifies the reported race on the driver's license. In observational studies this specification helps decision-makers understand the scope and limitations of the causal conclusions. 

\begin{table*}[t!]
    \small
    \centering
    \begin{tabular}{P{2.5cm}P{1.5cm}P{1.5cm}P{3cm}P{2cm}P{2cm}P{1.5cm}}
    \toprule
    \multirow{1}{*}{} & \multirow{1}{*}{Domain} & \multicolumn{5}{c}{Fairness Violation (Statistical)}  \\
    \cmidrule{3-6} &  & Parity & Positive Pred. Parity & E-Odds (TP) & E-Odds (FP) & Accuracy\\
    \hline
    %Data &  & - & - & - & - & - \\
    ReW  & \multirow{5}{*}{Hiring} & 0.092 &  0.054 &  \textbf{0.028} &  \textbf{0.016} & \textbf{0.808} \\
    PRem &  & 0.141 & \textbf{0.008} & 0.100 &  0.046 & 0.772  \\
    ROC & & 0.028 & 0.085 & -0.036 & -0.045 & 0.798 \\
    Causal (post) &  & 0.085 &  0.023 & \textbf{0.028} &  {0.018} &  0.768  \\
    Causal (pre) & & \textbf{0.009} &  0.109 & -0.039 & -0.069 & 0.788 \\
    \hline
    %Vanilla & & -0.080 & 0.039 & 0.065 & -0.020 & 0.932 \\
    %Data & \multirow{5}{*}{Policing} & -0.080 & - & - & - & - \\
    ReW  & \multirow{5}{*}{Policing} & -0.094 &  0.098 & 0.022 & -0.040 & \textbf{0.933}\\
    PRem & & -0.076 &  0.085 & 0.055 & -0.031 & 0.924 \\
    ROC & & -0.096 &  0.104 & 0.018 & -0.041 & 0.932 \\
    Causal (arrest) & & -0.065 & 0.089 & -0.043 & -0.016 & 0.927 \\
    Causal (search) &  & \textbf{-0.011} & \textbf{-0.064} & \textbf{0.014} & \textbf{-0.011} & 0.921 \\
    \bottomrule
    \end{tabular}
\caption{Fairness violation of statistical criteria and classification accuracy. Bold numbers indicate the best value in each column.}
\label{tab:mitigation}
%\vspace*{-\baselineskip}
\end{table*}

% \begin{table*}[t!]
%     \centering
%     \begin{tabular}{P{2.5cm}P{1.5cm}P{3cm}P{2cm}P{2cm}P{1.5cm}P{1.5cm}}
%     \toprule
%     \multirow{1}{*}{} & \multicolumn{5}{c}{Fairness Violation (Statistical Criteria)} &  \\
%     \cmidrule{2-5} & Parity & Positive Pred. Parity & E-Odds (TP) & E-Odds (FP) & Accuracy & F-1 Score \\
%     \hline
%     Vanilla & -0.080 & 0.039 & 0.065 & -0.020 & 0.932 & 0.750 \\
%     ReW  & -0.064 & 0.020 & 0.047 & -0.019 & 0.931 & 0.749 \\
%     PRem & -0.065 & 0.047 & 0.051 & -0.021 & 0.933 & 0.752 \\
%     ROC & -0.069 & 0.046 & 0.017 & -0.021 & 0.932 & 0.751 \\
%     %Causal (CF) & -0.019 & -0.000 & -0.002 & -0.005 & 0.952 & 0.761 \\
%     Causal (Arrest) & -0.056 &  &  &  &  &  \\
%     Causal (Stop) & \textbf{-0.020} & -0.077 & -0.001 & -0.018 & 0.925 & 0.734 \\
%     \bottomrule
%     \end{tabular}
% \caption{Fairness violation of statistical criteria and the classification accuracy.}
% \label{tab:policing-mitigation}
% %\vspace*{-\baselineskip}
% \end{table*}

We use three neural networks with two hidden layers of size 8 to impute the post-treatment variables: whether or not an individual is searched, the search outcome and the arrest outcome. In the search outcome stage, we filter the counterfactual outcomes based on the counterfactual search decisions. That is if the search is not conducted had the individual were of a particular race, the corresponding search outcome is set to ``nothing found.'' Based on the imputed counterfactuals, we evaluate causal parity in subsequent decisions. In what follows, we use $S$, $C$ and $Y$ to denote whether an individual was searched, contraband was found or an arrest was made. For intervention at \emph{Search Stage}:
 $\E[S(\texttt{White}) - S(\texttt{Black}) ] = -5.5\%$. Subsequently, $\E[C(\texttt{White}) - C(\texttt{Black}) ] = -1.5\%$ which is relatively small given that black individuals are searched more often. Finally, $\E[Y(\texttt{White}) - Y(\texttt{Black}) ] = -8.2\%$. Statistical parity violation is also -8.3\%. We compare this against intervention at \emph{Arrest Stage}, where $\E[Y(\texttt{White}) - Y(\texttt{Black}) ] = -3.9\%$. This indicates that by moving the time point of the causal analysis to a future time, we are uncovering smaller discriminatory effects. Once again, this underscores the importance of specifying the timing of the intervention, which is absent in purely statistical analyses. Next, we train a causal model using the outcome under the condition of being perceived as White. The causal parity violation of baseline models is: ReW (-0.0258), PRem (-0.024) and ROC (-0.028), compared to zero disparity of our causal approach. Table~\ref{tab:mitigation} also summarizes the results of statistical criteria. As seen, our causal model outperforms the existing models along all criteria without much loss in accuracy. We also observe that we can significantly improve the parity measure by accounting for the discriminatory effects in the search stage compared to the arrest stage.

% \begin{table}[t!]
%     \centering
%     \begin{tabular}{P{1.5cm}|P{2.5cm}P{2.5cm}}
%     \toprule
%     \multirow{1}{*}{Time} & Causal Parity &  Statistical Parity\\
%     \hline
%     Stop & -0.082 &  \\
%     Arrest & -0.039 & \\ 
%     \bottomrule
%     \end{tabular}
% \caption{Fairness violation of statistical criteria and the classification accuracy.}
% \label{tab:policing-mitigation-time}
% \end{table}

\section{Conclusion}

%As empirical evidence on ethical implications of algorithmic decision-making is mounting, a variety of approaches have been proposed to evaluate and minimize the harms of these algorithms. In the statistical fairness literature, it is well-established that it is not possible to satisfy every fairness criterion simultaneously, which resultsin significant trade-offs in selecting a metric. On the other hand, in the causal fairness literature, there is substantial
%ambiguity around how the proposed methods should be applied to a particular problem. Also, these methods rely on
%assumptions that are often too strong to be applicable in practice. 

In this work, we illustrated the utility of applying concepts from the ``potential outcomes framework'' to algorithmic fairness
problems. 
%In particular, we emphasized the timing and nature of the intervention as two key aspects of causal fairness analysis. That is, for any valid causal analysis, it is critical to precisely define the starting point of the fairness evaluation and the postulated intervention. 
We argue that fairness evaluation is not a static problem and unfairness can happen at various points and within and across multiple domains. This is in contrast with methods that rely on fixed DAG models. Next, we demonstrated how such a causal framework can address the limitations of existing approaches. Specifically,
our theoretical investigation indicates that there is an inherent compatibility between the causal fairness definitions we
propose. Finally, we showed the effectiveness of our approach in evaluating and mitigating unfairness associated with
different stages of decision-making. We hope that our empirical observations spark additional work on collecting new
datasets that lend themselves to temporal fairness evaluation.

\bibliography{aaai23}

\onecolumn
\appendix
\section{Supplemental Material}

\subsection{Related Work}
There are two main frameworks for causal inference: structural causal models~\cite{Hitchcock2001Causality:Inference} that represent the relationship between variables via a DAG, and the potential outcomes framework (POF)~\cite{Rubin2005CausalOutcomes}. DAGs can be viewed as a sequence of steps for generating a distribution from independent noise variables. Causal queries are performed by changing the value of a treatment variable and propagating its effect through the DAG~\cite{Hitchcock2001Causality:Inference}. POF, on the other hand, postulates potential outcomes under different interventions, albeit some unobserved. In general, DAGs encode more assumptions about the relationships of the variables; i.e., one can derive potential outcomes from a DAG, but potential outcomes alone are not sufficient to construct the DAG. Consequently, POF has been more widely adopted in empirical research, including bias evaluation outside of ML~\cite{Bertrand2004AreDiscrimination,VanderWeele2014OnVariables, gaebler2022causal}. More detailed discussion on the differences between the two frameworks in relation to empirical research can be found in~\cite{Imbens2020PotentialEconomics}. Causal inference on immutable attributes has appeared in several works including~\cite{VanderWeele2014OnVariables,Kilbertus2017AvoidingReasoning, gaebler2022causal}. Specifically,~\citet{gaebler2022causal} consider manipulations on perceptions of sensitive attributes. The authors study the timing of perceiving one’s race in a prosecutorial setting and issues of post-treatment bias. Their analysis, however, is limited to estimation and not fair
prediction.

%via proxy variables and\newar{~\cite{Greiner2011CausalCharacteristics} through the perception of an immutable attribute. In this work, we follow the footsteps of~\cite{Greiner2011CausalCharacteristics} and  provide a rigorous framework to reason about the causal effect of immutable attributes which helps avoid some of the common issues in causal inference including post-treatment bias.}

In machine learning, there has been a rising interest in causal reasoning for designing fair prediction/decision-making systems, where the majority of works have adopted the DAG framework~\cite{Kusner2017CounterfactualFairness,Kilbertus2017AvoidingReasoning,Zhang2018FairnessFormula,Zhang2018EqualityApproach,Madras2019FairnessData,Chiappa2019PathspecificFairness} with a few exceptions that rely on POF~\cite{Nabi2018FairOutcomes,Khademi2019FairnessCausality}. Specifically,~\citet{Kusner2017CounterfactualFairness} provide an individual-based causal fairness definition that renders a decision fair towards an individual if it is the same in the actual world and a counterfactual world where the individual possessed a different sensitive attribute. Their model requires a full specification of the structural causal model, including the structural equations and the DAG.~\citet{Kilbertus2017AvoidingReasoning} propose \emph{proxy discrimination} as (indirect) discrimination via proxy variables such as name, visual features, and language which are more amenable to manipulation. The removal
of proxy discrimination, however, depends on the functional form of the causal dependencies which is not available in practice.~\citet{Nabi2018FairOutcomes} and~\citet{Chiappa2019PathspecificFairness} study path-specific discrimination, where the former proposes to remove the descendants of the sensitive attribute under the unfair pathway and the latter aims to correct the those variables. In practice, we are often interested in discrimination attributed to particular actors. In addition, identifying the problematic causal paths is quite challenging as context-specific. In a similar fashion,~\citet{salimi2019interventional} define causal fairness based on the notion of admissible variables, variables through which it
is permissible for the sensitive attribute to influence the outcome. They propose a data pre-processing approach to enforce it. In the POF realm,~\citet{Khademi2019FairnessCausality} propose two causal definitions of group fairness: fair on average causal effect (FACE), and fair on average causal effect on the treated (FACT) and show how these quantities can be estimated for sensitive attributes as the treatment. The authors restrict their attention to the fairness evaluation task (not prediction) and do not discuss the distinction between pre- and post-treatment variables. Finally,~\citet{Zhang2018FairnessFormula,Zhang2018EqualityApproach} discuss counterfactual direct, indirect, and spurious effects and provides formulas to identify these quantities from observational data. These works too rely on a causal model, or DAG, and develop different methodologies to identify and mitigate unfairness.

Most of the existing works lack a clear discussion of the causal assumptions which hinders the adoption of these methods in practice. In addition, the validity of the causal assumptions are influenced by the nature of the postulated interventions and its timing, which is not clearly articulated in the current literature. In many applications, discrimination by specific individuals or institutional actors is the subject of a study---not an entire causal pathway. Our work makes these distinctions by highlighting the importance of specifying the timing of hypothetical intervention.

We also briefly review the observational/statistical notions of fairness. Demographic parity and its variants have been studied in numerous papers~\cite{Dwork2012FairnessAwareness,Feldman2015CertifyingImpact,Corbett-Davies2017AlgorithmicFairness}. Also referred to as statistical parity, this fairness criteria requires the average outcome to be the same across different sensitive groups. Conditional statistical parity~\cite{Feldman2015CertifyingImpact,Corbett-Davies2017AlgorithmicFairness} imposes a similar requirement after conditioning on a set of legitimate factors. In the classification setting, equalized odds and a relaxed variant, equality of opportunity, have been proposed to measure the disparities in the error rate across different sensitive groups~\cite{Hardt2016EqualityLearning}. These criteria are often easy to state and interpret. However, they do not often appropriately account for relevant differences across individuals, e.g., coding experience in a software development job~\cite{dutta2021fairness}. In addition, they suffer from a major limitation: it is impossible to simultaneously achieve these criteria on any particular dataset~\cite{Kleinberg2017InherentScores, Chouldechova2017FairInstruments,SolonBarocasMoritzHardt2020FairnessOpportunities}. For more detailed discussions, we refer the readers to the survey on fair machine learning~\cite{Mehrabi2021ALearning}. 
In this work, we revisit these notions and introduce their causal variants, where we show that under the causal framework, there is no fundamental disagreement between different criteria.

Finally, different unfairness mitigation approaches have been proposed in the statistical fairness literature.~\citet{Kamiran2012DataDiscrimination} developed a pre-processing technique that weights the examples in each (sensitive group, label) combination differently to ensure fairness before classification.~\citet{kamishima2012fairness} proposed an in-processing algorithm that adds a discrimination-aware regularization term to the learning objective. Post-processing methods aim to modify the classifier's output to satisfy the fairness criteria.~\citet{kamiran2012decision} designed an approach that gives favorable outcomes to unprivileged groups and unfavorable outcomes to privileged groups in a confidence band around the decision boundary. Our approach closely relates to the pre-processing scheme. Precisely, we propose to transform the data distribution to one that belongs to a ``fair world,'' one that treats each individual with no regard to their sensitive attribute. Consequently, we substitute the post-treatment variables, including the outcome of interest, with a value pooled from the counterfactual outcomes under $A = 0$ or $A = 1$. One benefit of the pre-/post-processing approaches is that they are not to a specific classifier, giving more flexibility to the analyst to choose the best model.

\subsection{Proofs of Statements in Section~\ref{sec:tradeoff}}
In this section, we provide formal proofs for the theoretical results in Section~\ref{sec:tradeoff}. 

\begin{proof}[Proof of Proposition~\ref{prop:possible-causal-parity}]
    $\E[Y(1) - Y(0)] = \E [ \E[Y(1) - Y(0) \mid \bm X = \bm x ] ] = \E[0] = 0.$
\end{proof}

%We start by proving the following Lemma. 
\iffalse
\begin{lemma}
Suppose $A/B  = C/D$ and $(1-A)/(1-B) = (1-C)/(1-D)$, 
% \begin{equation}
% \begin{array}{cc}
%    & A/B  = C/D \\
%    & (1-A)/(1-B) = (1-C)/(1-D), 
% \end{array}
% \end{equation}
where $B \neq 0, 1$ and $D \neq 0, 1$. It holds that:
$B = D$ and $A = C$.
\label{lem:1}
\end{lemma}
\begin{proof}
\begin{align*}
    \begin{rcases}
    & \frac{A}{B} = \frac{C}{D} \Rightarrow \frac{A - B}{B}  = \frac{C - D}{D} \\
    & \frac{1-A}{1-B}  = \frac{1-C}{1-D} \Rightarrow \frac{A-B}{1-B}  = \frac{C-D}{1-D} \\ 
    \end{rcases}
    \Rightarrow \frac{1-B}{B} = \frac{1-D}{D} \Rightarrow \frac{1}{B} - 1 = \frac{1}{D} - 1 \Rightarrow B = D.
\end{align*}
In a similar fashion, one can show that $A = C$.
\end{proof}
\fi

\begin{proof}[Proof of Theorem~\ref{thm:main-theorem}]

\begin{align}
    \P(\hat{Y}(1) = 1 \mid Y(1) = 0) & = \P(\hat{Y}(1) = 1 \mid Y(1) = 0)\frac{\P(Y(1) = 0)}{1 - \P(Y(1) = 1)} \nonumber\\ 
    & = \frac{\P(\hat{Y}(1) = 1) - \P(\hat{Y}(1) = 1 \mid Y(1) = 1)\P(Y(1) = 1)}{1 - \P(Y(1) = 1)} \quad \text{[Total Probability Rule]} \nonumber\\
    & = \frac{\P(\hat{Y}(1) = 1) - \P(Y(1) = 1 \mid \hat{Y}(1) = 1)\P(\hat{Y}(1) = 1)}{1 - \P(Y(1) = 1)}\quad \text{[Bayes Theorem]} \nonumber\\ 
    & = \frac{1 - \P({Y}(1) = 1 \mid \hat{Y}(1) = 1)}{1 - \P(Y(1) = 1)} \P(\hat{Y}(1) = 1) \nonumber\\
    & = \frac{1 - \P({Y}(1) = 1 \mid \hat{Y}(1) = 1)}{1 - \P(Y(1) = 1)} \frac{\P(Y(1) = 1 \mid \hat{Y}(1) = 1)}{\P(Y(1) = 1 \mid \hat{Y}(1) = 1)}\P(\hat{Y}(1) = 1) \nonumber\\
    & = \frac{1 - \P({Y}(1) = 1 \mid \hat{Y}(1) = 1)}{1 - \P(Y(1) = 1)} \frac{\P(\hat{Y}(1) = 1 \mid {Y}(1) = 1)}{\P(Y(1) = 1 \mid \hat{Y}(1) = 1)}\P({Y}(1) = 1) \nonumber\\
    & = \frac{1 - \P({Y}(1) = 1 \mid \hat{Y}(1) = 1)}{\P(Y(1) = 1 \mid \hat{Y}(1) = 1)}\frac{\P({Y}(1) = 1)}{\P({Y}(1) = 0)}{\P(\hat{Y}(1) = 1 \mid {Y}(1) = 1)}.
    \label{eq:1}
\end{align}
In a similar fashion, we can derive the following equality:
\begin{align}
    \P(\hat{Y}(0) = 1 \mid Y(0) = 0) & = \frac{1 - \P({Y}(0) = 1 \mid \hat{Y}(0) = 1)}{\P(Y(0) = 1 \mid \hat{Y}(0) = 1)}\frac{\P({Y}(0) = 1)}{\P({Y}(0) = 0)}{\P(\hat{Y}(0) = 1 \mid {Y}(0) = 1)}.
    \label{eq:2}
\end{align}
As given in the premise of the theorem, $\P(Y(1) =  1) = \P(Y(0) = 1)$ and $\P(Y(1) = 0) = \P(Y(0) = 0)$. Subsequently, 
$$\frac{\P({Y}(1) = 1)}{\P({Y}(1) = 0)} = \frac{\P({Y}(0) = 1)}{\P({Y}(0) = 0)}.$$
From Equations~\eqref{eq:1} and~\eqref{eq:2} and the above equality, it follows that a classifier that satisfies any two fairness criteria will automatically satisfy the last one. 
\end{proof}

\subsection{Experimental Results in Section~\ref{sec:experiments}}

\noindent\textbf{Synthetic. }Figure~\ref{fig:evaluation-synthetic-big} provides more extensive unfairness evaluation results, comparing our causal approach and a purely statistical approach against different their respective fairness criteria. As seen, all statistical criteria fail to capture the true level of discrimination: for negative alpha values, the statistical criteria tend to underestimate unfairness. On the contrary, when $\alpha > 0 $, statistical criteria overestimate unfairness along all four metrics. The causal and statistical lines only overlap at $\alpha = 0$ when there are no baseline differences between groups. Figure~\ref{fig:evaluation-synthetic-big-gamma} also compares approaches across different values of $\gamma$ (discrimination at the hiring stage), where can we see that the causal (post) fairness violation now increases with $\gamma$, albeit with an offset compared to causal (pre). That is because causal (pre) also measures the unfairness in the interview stage that is perpetuated in the hiring stage. 

We now provide the full causal evaluations on different unfairness mitigation techniques in Table~\ref{tab:mitigation-causal}. The models are evaluated as if they are deployed in worlds where one is a male or female. Positive numbers indicate that the decision process is in favor of males. By surgically manipulating one's gender, post-treatment variables are affected. However, what variables are considered post-treatment depends on the timing of the manipulation. Here, we consider the interview stage as our primary point of (hypothetical) manipulation used for all evaluations. We compare causal (pre) and (post) to distinguish between two models that are \emph{trained} based on manipulations at the interview and hiring stage. According to this table, we observe that methods that aim to reduce statistical unfairness may lead to significant disparities under the causal lens, all in favor of males. Causal (post) also has a higher unfairness than causal (pre) which is due to the fact it does not adjust for the unfairness in the interview stage. Nonetheless, it performs competitively compared to the statistical methods. Causal (pre) achieves zero disparity (by design) through adjustment of the post-treatment variables and outcomes. We note that if we evaluate the causal (post) approach based on an intervention in the hiring stage, we would obtain zero disparities for the same reasons as explained above.

\begin{figure}
    \centering
    \includegraphics[width = 0.9\textwidth]{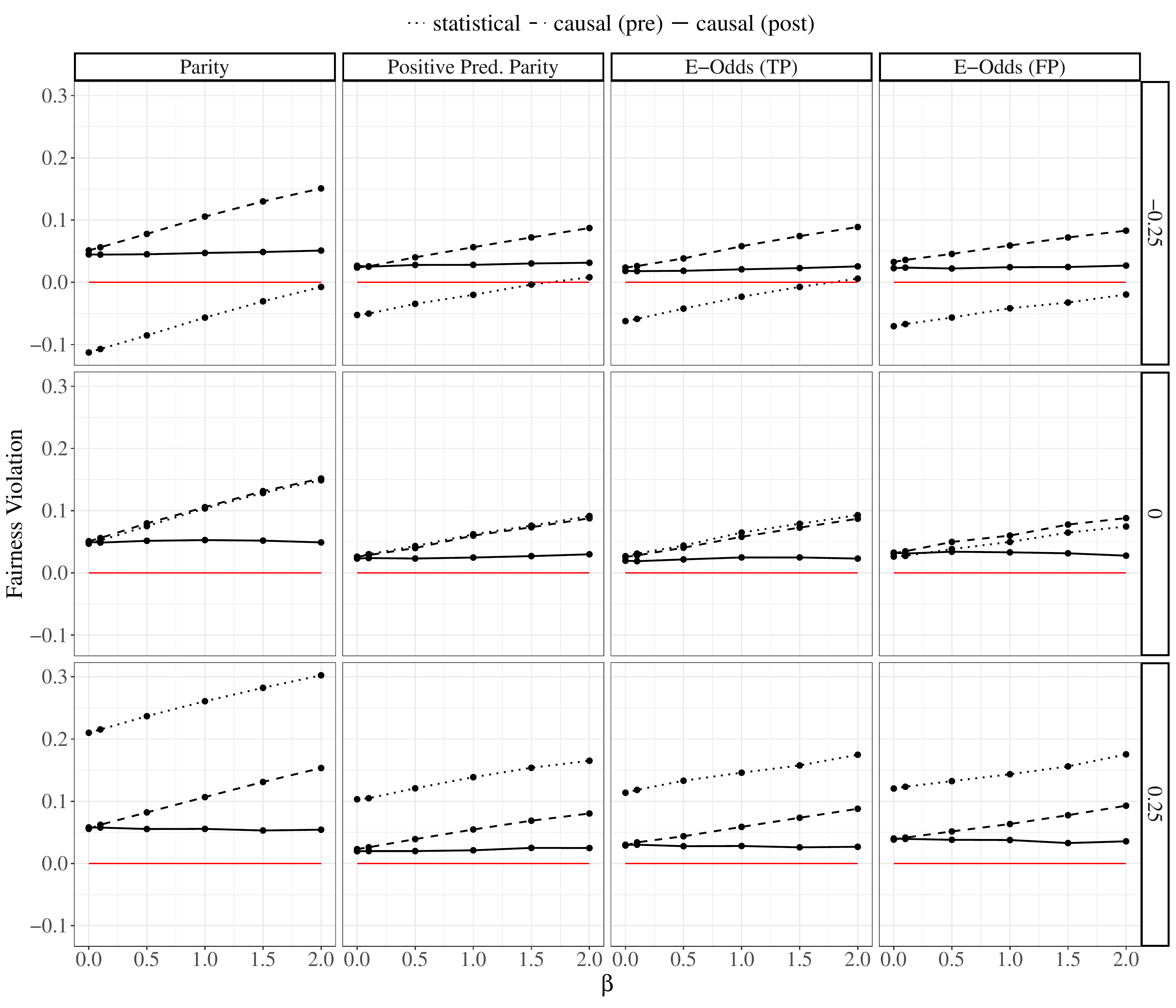}
    \caption{Stylized hiring scenario: line types indicate the unfairness evaluation approach. Causal (pre) and causal (post) measure pre- and post-interview causal disparity, respectively, and are compared against the statistical disparity line. Each panel corresponds to a different $\alpha$ value and fairness criteria. The $y-$axis exhibits violation of parity fairness criteria across values of $\beta$ and $\gamma = 0.2$.}
    \label{fig:evaluation-synthetic-big}
\end{figure}

\begin{figure}
    \centering
    \includegraphics[width = 0.9\textwidth]{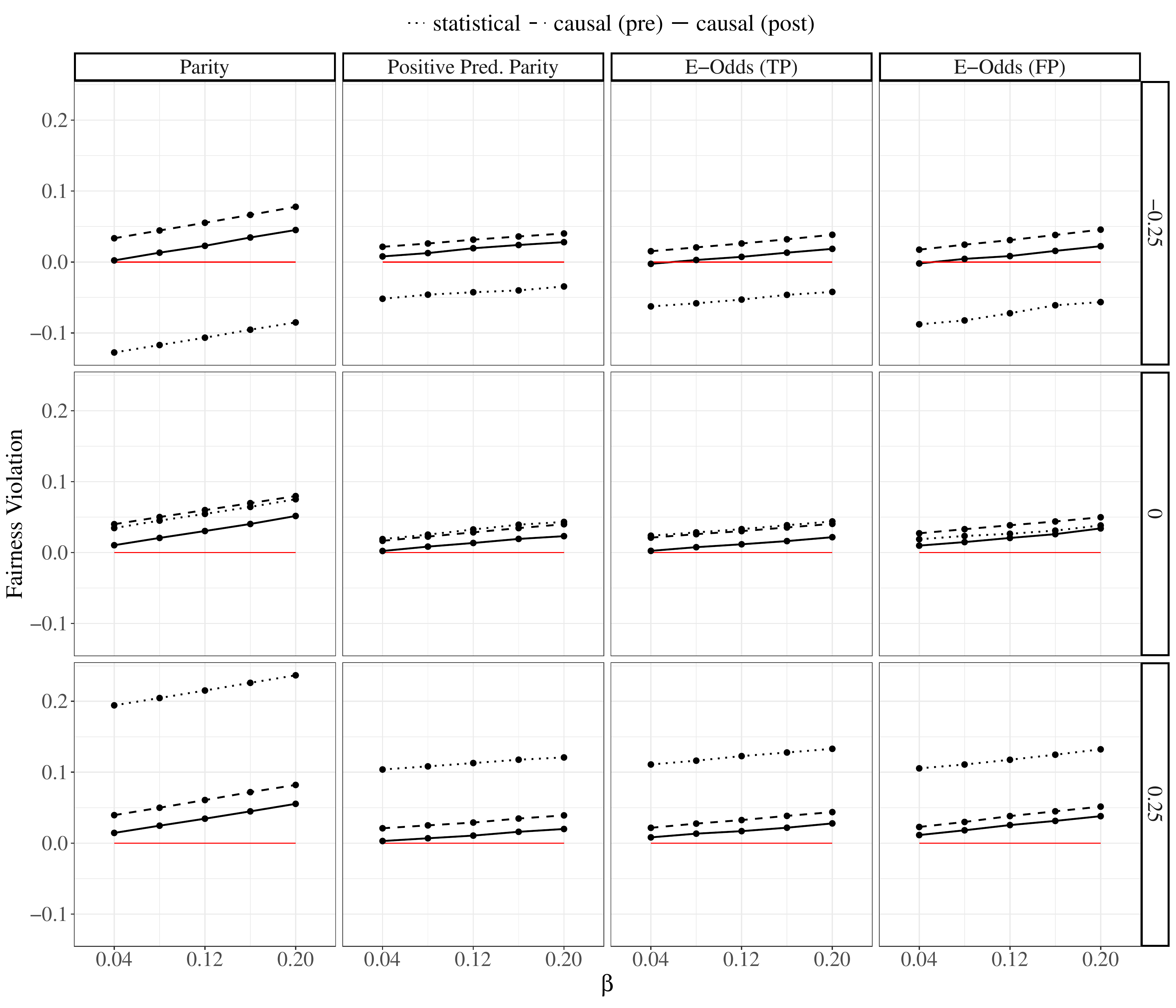}
    \caption{Stylized hiring scenario: line types indicate the unfairness evaluation approach. Causal (pre) and causal (post) measure pre- and post-interview causal disparity, respectively, and are compared against the statistical disparity line. Each panel corresponds to a different $\alpha$ value and fairness criteria. The $y-$axis exhibits violation of parity fairness criteria across values of $\gamma$ and $\beta = 0.5$.}
    \label{fig:evaluation-synthetic-big-gamma}
\end{figure}

\noindent\textbf{Police Stop Data Pre-processing. } We choose a slice of data after 2017, in New Orleans as it had the fewest columns with more than 70\% of the values missing. We further limit our analysis to vehicle stops (the data also contains person stops) and drivers whose race is either {Black} or {White}. We modify the decision on a search to include both frisk and vehicle search which are recorded in separate columns. That is because the order in which either one of the actions has been performed is not available, therefore, it is not immediately clear how one can treat them as separate decisions. We are primarily interested in evaluating the causal effect of race on the arrest decision. It is natural to choose the search stage as the time of intervention. In this case, decisions such as being searched, search outcome and being arrested become post-treatment variables. This is not the case if we choose the arrest stage as the point of intervention. We present results for both stages for comparison.

Determining the nature of the (hypothetical) intervention, however, is more challenging. For example, if the driver's license contained the race of the driver, we could possibly assume that the measured causal effect is associated with an intervention that modifies the reported race on the driver's license. In our analyses, we are making the assumption that the officer's perception of an individual's race maps to the binary White vs. Black records in the data. This means that the perception is not wrongfully influenced by factors such as the car one drives, the neighborhood or the time of day. Indeed, we are controlling for these variables by conditioning on them. Thus, the causal effects that we uncover can not be attributed to the influence of these features. We also assume that there are no multiple versions of Black or White which means that the level of the darkness of the skin color does not influence how an officer treats an individual. With respect to the positivity assumption, our data consists of both categorical and numerical variables. Ensuring that the positivity assumption holds is more challenging for numerical variables as it is very likely that there are not enough observations of both groups at each value of the numerical variables. However, it is possible to generalize inferences from neighboring values. In this work, we focus on positivity with respect to the categorical variables that have less generalizability power. We considered all intersections of the set of categorical variables in the data and removed those instances with only White or Black observations. As a result, 21\% of the data was removed. Finally, as confounders, we control for all the pre-treatment variables in the data including district, the hour of the day, the month of the year, officer assignment and the police zone, among others. We have also limited the location and time range of the data collection to control for confounding factors that affect policing practices over the years and locations (such as state laws).

Table~\ref{tab:mitigation-causal} shows the fairness violation, under causal metrics, across different methods. Here, negative numbers suggest the decision-making process is in favor of White individuals (being arrested is a negative outcome compared to being hired so lower values are preferred).  Similar to the hiring scenario, we observe that causal (post) reduces causal unfairness, however, it fails to eliminate the effect of the prior stage (search). In addition, statistical baselines exhibit different outcomes for White and Black individuals across all causal metrics. We hope that the results from the above experiments encourage adopting a causal perspective to unfairness evaluation and mitigation, especially as it provides a more interpretable framework to reason about the disparities. Indeed, as we showed, improving the causal criteria also leads to better observed outcomes across groups. Identifying situations where causal and statistical criteria are aligned is an interesting direction for future work.

%\paragraph{HMDA Data Pre-processing.}
%we used a sample of the public HMDA data from Louisiana state in 2017. We restrict the data to first lien loan type for single families. We drop columns and rows with more than 20\% missing values and impute the remaining missing values with the mean value of each column.  We study the loan application outcome for black and non-Hispanic white individuals. The final sample contains around \newar{XXX} observations.

\begin{table*}[t!]
    \small
    \centering
    \begin{tabular}{P{2.5cm}P{1.5cm}P{1.5cm}P{3cm}P{2cm}P{2cm}}
    \toprule[1pt]\midrule[0.3pt]
    \multirow{1}{*}{} & \multirow{2}{*}{Domain} & \multicolumn{4}{c}{Fairness Violation (Causal)}  \\
    \cmidrule{3-6} &  & Parity & Positive Pred. Parity & E-Odds (TP) & E-Odds (FP) \\
    \hline
    %Data &  & - & - & - & - & - \\
    ReW  & \multirow{5}{*}{Hiring (pre)} & 0.079 & -0.049 & 0.071 & 0.087\\
    PRem & & 0.090 & -0.044 & 0.112 & 0.068 \\
    ROC & & 0.143 & -0.077 & 0.154 & 0.131 \\
    Causal (post) & & 0.084 & -0.021 & 0.111 & 0.040 \\
    Causal (pre) & & 0.000 & 0.000 & 0.000 & 0.000 \\
    \hline
    ReW  & \multirow{5}{*}{Policing (pre)} & -0.044 & 0.029 & 0.052 & -0.016 \\
    PRem &  & -0.040 & 0.032 & 0.063 & -0.015 \\
    ROC &  & -0.045 & 0.038 & 0.048 & -0.018 \\
    Causal (arrest) &  & -0.037 & 0.003 & -0.047 & -0.036 \\
    Causal (search) &  & 0.000 & 0.000 & 0.000 & 0.000 \\
    \midrule[0.3pt]\bottomrule
    \end{tabular}
\caption{Fairness violation of causal criteria for two decision-making systems: a synthetic hiring scenario and the police stop and search decision. Evaluations are conducted assuming that one intervenes at the beginning of the decision-making process. }
\label{tab:mitigation-causal}
\end{table*}

\end{document}